\newtheorem{thm}{Theorem}[section]
\newtheorem{assume}[thm]{Assumption}
\newcommand{\spaceo}{\hspace{2 mm}}
\newcommand{\setsep}{ \spaceo | \spaceo}
\newcommand{\half}{\frac{1}{2}}
\newcommand{\Abs}[1]{\left| #1 \right|}
\newcommand{\Set}[1]{\left\{ #1 \right\}}
\newcommand{\Brack}[1]{\left( #1 \right)}
\newcommand{\Expsubidx}[2]{ \mathbb{E}_{#1} #2}
\begin{document}

\title{Multi Instance Learning For Unbalanced Data}

\author{Mark Kozdoba \\ 
Technion,IIT \\
{\tt\small markk@technion.ac.il} \\
\and 
Edward Moroshko \\
Technion,IIT \\
{\tt\small edward.moroshko@gmail.com}
\and 
Lior Shani\\ 
Technion,IIT\\
{\tt\small shanlior@gmail.com}
\and
Takuya Takagi\\ 
Fujitsu Laboratories\\
{\tt\small takagi.takuya@jp.fujitsu.com}
\and 
Takashi Katoh\\
Fujitsu Laboratories\\
{\tt\small kato.takashi\_01@jp.fujitsu.com}
\and 
Shie Mannor\\
Technion,IIT\\
{\tt\small shie@ee.technion.ac.il}
\and 
Koby Crammer\\
Technion,IIT \\
{\tt\small koby@ee.technion.ac.il}
}

\iffalse
\author{First Author\\
Institution1\\
Institution1 address\\
{\tt\small firstauthor@i1.org}
\and
Second Author\\
Institution2\\
First line of institution2 address\\
{\tt\small secondauthor@i2.org}
}
\fi
\date{}

\maketitle

\begin{abstract}
In the context of Multi Instance Learning, we analyze the Single Instance (SI) learning objective. We show that when the data is unbalanced and the family of classifiers is sufficiently rich, the SI method is a useful learning algorithm. 
In particular, we show that 
larger data imbalance, a quality that is typically perceived as negative, in fact implies a better resilience of the algorithm to the statistical dependencies of the objects in bags. In addition, 
our results shed new light on some known issues with the SI method in the setting of linear classifiers, and we show that these issues are significantly less likely to occur in the setting of neural networks. 
We demonstrate our results on a synthetic dataset, and on the COCO dataset for the problem of patch classification with weak image level labels derived from captions.
\end{abstract}

\section{Introduction}
Multi Instance Learning (MIL) is a generalization of supervised learning, 
where the data is given as \textit{bags}, and each bag is a set of 
objects. Each object can be either positive or negative, but we are not 
given this information. Instead, we are given the label for a bag as a 
whole, such that the bag is positive if and only if at least one of the 
elements in the bag is positive. The goal is to learn the 
\textit{instance  classifier} -- a classifier for objects, using only the 
bag labels. 

In recent years, there has been a constant stream of work on the MIL
problem. We refer to \cite{MILSurv1} and to \cite{MILSurv} for a survey 
of recent results and applications. 

One natural and important application of MIL is in the domain of images with weak 
labels. Here, one considers a large image, which may contain several 
objects, such as ``car'' or ``tree'', but the location of the objects in 
the image is not specified. In this case, one may divide the image into 
smaller overlapping patches, which together constitute a bag, such that 
some of the patches correspond to some of the labels. The labels 
themselves can be derived from some text related to the image, such as 
captions in the COCO dataset. This scheme was, for instance, an
important part of the automatic image description generation methods, 
such as \cite{karpathy15}, \cite{Fang_2015_CVPR}, but has numerous other 
applications.

One approach to the MIL problem is via the Single Instance (SI) method.
In this method, one simply \textit{unpacks} the bags, and considers a 
supervised learning problem where the data is the set of all objects from 
all the bags, and the label of each object is the label of the bag from 
which the object was extracted. In what follows we refer to this 
assignment of labels to objects as the SI assignment.

An oft-cited advantage of the SI method is its conceptual and 
computational simplicity. However, perhaps an even more important 
advantage is scalability. Indeed, non-SI MIL methods typically compute 
a certain score for each bag, which depends on individual scores of the 
objects in it. In oder to compute this score, one therefore may need to 
design iterative procedures if the bag is too large to fit in a single 
batch. In contrast, in SI the objects are no longer tied to a bag, 
and each bag can be divided into independent batches. Additional details 
are given in Section \ref{sec:conclusion}.

While the SI method was empirically investigated in the literature, 
there seems to be no complete picture with regards to when the method is 
effective and why, and at the same time significant efforts are made 
construct new and highly involved MIL methods.

In this paper, we show that in the case of unbalanced labels, and when 
the class of classifiers is rich enough, the SI method is effective. We 
now describe the results in more detail. 

Let $P$ and $N$ be the numbers of positive and negative bags in the dataset, respectively. Let $B$ be the ratio, such that $N = B \cdot P$. 
We call the dataset unbalanced if $B$ is large. For instance, 
in the COCO dataset, for a label ``car'', $B$ is about $30$.

An additional dataset characteristic, that affects the performance of all 
MIL algorithms, is the amount of intra-bag dependence. Roughly speaking, 
we say that the dataset has a low intra-bag dependence if the negative 
features in positive bags look like generic negative features. The full 
definition is given in Section \ref{sec:feature_dep_bags}, where we refer 
to this as the mixing assumption. It is known empirically, and for some 
methods theoretically, that under this assumption the MIL problem 
is relatively easy. Here we show that this is also the case for the SI 
method.

More importantly, however, we analyze the SI method in cases where the 
mixing assumption does \textit{not} hold. In these cases, we find that 
the larger the imbalance constant $B$ is, the more tolerant the SI 
method is to the lack of mixing. Since many natural datasets exhibit 
lack of mixing, but also data imbalance, it follows that the SI method 
is expected to perform well. The lack of mixing for images data in 
particular is discussed in Section \ref{sec:feature_dep_bags}.

Finally, as discussed in the Literature section in more detail, 
the evaluation of SI in the literature is done with \textit{linear} 
classifiers. On the other hand, our results, Theorems 
\ref{thm:main_thm} and \ref{thm:main_thm_gen} express the optimizer of the 
SI objective as a certain functional related to the optimal instance 
classifier. This functional, however, is rarely a linear classifier even 
if the optimal instance classifier is. This strongly suggest that to use 
the SI method, at least some non-linearity should be added, and that the 
SI method is particularly well suited to be used with neural networks.

In Section \ref{sec:experiments} we perform experiments on synthetic data, 
and on the COCO dataset with captions as weak labels. On the synthetic 
data, we demonstrate that the SI method is indeed tolerant to the lack of 
mixing, and that the use of a one hidden layer neural network  
significantly improves the results in comparison to a linear classifier, \textit{even when the ground truth data is linearly separable}. We also employ this example to 
show that one possible alternative to the SI method, 
based on noisy label methods (see the discussion in Sections 
\ref{sec:literature}, \ref{sec:feature_dep_bags}), is strongly sensitive 
to the lack of mixing.  In the COCO experiment, we reproduce the MIL 
setting of \cite{Fang_2015_CVPR}, with a 1000 tokens from captions as bag 
labels, and compare the SI objective with the 
soft-nor objective used in \cite{Fang_2015_CVPR}. Since in this setting 
one can not measure instance level performance, due to the lack of ground 
truth, we measure bag-level performance. Our results show that both 
objectives have a very similar performance, although the SI results are 
slightly lower. Possible reasons for this are discussed.

To summarize, the contributions of this paper are as follows: We provide a 
large-sample analysis 
of the SI method, and show (a) The optimal instance classifier can be 
obtained from the optimal SI assignment classifier simply by 
thresholding at an appropriate level. (b) The balance $B$ of the data 
plays an important role. The more unbalanced the data is (larger $B$) 
the more tolerant the algorithm becomes to data dependencies inside bags. 
To the best of our knowledge, these results are new and in particular the 
important role of the balance has not been previously noted.   
Next, we provide a link between the performance of the SI method and the 
richness of the classifier class and show that the SI method is 
particularly well suited to work with neural network classifiers.
Finally, we show that an SI method achieves performance comparable to the 
state-of-the art on image and captions data, and in addition demonstrate 
the tolerance of the SI to dependencies in bags, to support our 
theoretical results.

The rest of this paper is organized as follows: 
In Section \ref{sec:literature} we review the related literature. Section 
\ref{sec:si_analyis} contains the main results. In Section 
\ref{sec:experiments} the experiments on synthetic data, comparison to a 
noisy label classifier, and the experiment on COCO data are presented. 
We conclude the paper in Section \ref{sec:conclusion} with a discussion  
of possible future research directions.

\section{Literature}
\label{sec:literature}
As discussed in the Introduction, the field of MIL has generated a large 
amount of  interest and is still growing. General surveys can be found in 
\cite{MILSurv1} and in the very recent \cite{MILSurv}. Examples of some  
recent work related to, or using MIL methods may be found in 
\cite{Hoffman_2015_CVPR}, \cite{Wu_2015_CVPR},\cite{Li_2015_CVPR},
\cite{karpathy15},\cite{Fang_2015_CVPR} and \cite{attentionMIL}.

We now discuss specifically the SI related literature. SI methods were 
empirically evaluated and compared to other methods in \cite{Ray}, 
and more recently in \cite{Alpaydin}. In \cite{Ray}, it was found 
that in many cases, the SI methods yield the most competitive results. 
It is of interest to note that the evaluation in \cite{Ray} is done with 
linear classifiers. As discussed in the Introduction and shown in Section 
\ref{sec:niosy_label_estimators}, the results would have likely improved 
even more if one were to add even a slight non-linearity.

In \cite{Alpaydin}, it was found that MIL specific objectives perform 
better than SI methods in cases with intra-bag dependency in the data. 
Here, it is important to distinguish between two situations. First, 
in part of the experiments in \cite{Alpaydin}, the label is not assigned 
to the bag by the rule which we discuss here: the bag is positive if and 
only if it contains at least one positive instance. These experiments 
simply investigate a different scenario. Second, in the 
experiments where the bag label \textit{is} assigned as above, only linear 
classifiers are evaluated. Again, one of the insights of this paper is 
that once we allow non-linearity, the results improve significantly

In \cite{bunescu2007multiple} it is argued that 
in some scenarios involving \textit{sparse} bags, SI methods may not 
perform well and alternative methods are proposed. Note that the example 
of images with caption labels does qualify as sparse bag situation. For 
instance, if ``frisbee'' is the label, an image may contain hundreds of 
patches, but only few of them would contain the frisbee. Nevertheless, 
in this paper we show that at least in the unbalanced data situation, 
bag sparsity is not an issue. All our experiments are with sparse bags, 
and in Section \ref{sec:si_analyis}, the parameter $l$ which controls 
sparsity, may be either small or large.

One possible approach to the MIL problem is to consider the SI label 
assignment as a \textit{noisy label} problem. The idea is that the 
assignment of a positive label to a negative object in a positive bag may 
be considered as label noise. One may then apply noisy label learning methods 
to recover the original labels. 
A variation of this approach was analyzed 
in terms of sample complexity in \cite{BlumMIL}, although that result does 
not lay itself to a practical algorithm. Another possibility is to use the 
noisy labels approach of \cite{Natarajan}. In \cite{Natarajan}, given the 
noisy label data, an new cost is constructed, such that the minimization 
of the new cost with respect to the noisy labels yields a classifier that 
is optimal with respect to the original labels. Unfortunately, both the 
arguments and the actual methods in both \cite{BlumMIL} and 
\cite{Natarajan} rely heavily on intra-bag independence. We refer 
to Section \ref{sec:feature_dep_bags} for additional details. In 
Section \ref{sec:niosy_label_estimators}, we show how the method 
based on the cost from \cite{Natarajan} fails, while the SI method does 
not, when the independence assumptions are violated.

\iffalse
Our work is closely related to the noisy labels setting. Many previous works analyze this setting assuming... Specifically, the work of \cite{Natarajan} proposed the unbiased loss estimator method for binary classification in which...
This method has been extended and generalized to multiclass classification by ...
however, when the mixing assumption is incorrect, i.e. the data is unbalanced, we show that the SI method is competitive and even can outperform the unbiased loss estimator method.
\fi

\newcommand{\mppmt}{\mathcal{P'_{-}}}
\newcommand{\mnnt}{\mathcal{N'}}

\section{SI Analysis}
\label{sec:si_analyis}
In this section we present theoretical analysis of the SI method. Definitions are given in the following section. In Section \ref{sec:feature_dep_bags} we discuss the mixing intra-bag dependence. The theorems and their interpretations are presented in Section \ref{sec:results_results}.
\subsection{Definitions}
\label{sec:si_definitions}
In this section we analyze the SI algorithm for the MIL problem. 
The loss function for multiple classes will be obtained by summing the 
losses of each class individually, and therefore we discuss here the case
of single class with a binary label. 

The MIL dataset $\mathcal{S} = \Set{b}$ is given as a set of 
\textit{bags},  where each bag contains $M$ objects $x_j$, 
$b=\Set{x_1,\ldots,x_M}$, and for every bag there is a $0/1$-valued label
$y_b$. We assume that the labels belong to objects -- each object $x_i$ in 
$b$ has a label $y_{x_i}$, and we make the classical MIL assumption where $y_b = 1$ if and only if $y_{x_i} = 1$ for 
some $x_i$ in $b$. Our objective is to learn an instance-level classifier mapping from a single object $x_j$ to $0/1$, by employing the bag-level training data $\mathcal{S}$.
%and a bag level classifier mapping from a bag $b$ to ${0,1}$.
%find the labels $y_{x_i}$ from 
%$\mathcal{S}$ and the bag labels $y_b$. 

Denote by $\mathcal{P} = \Set{ b \in \mathcal{S} \setsep y_b = 1}$ 
the set of positive bags, and by 
$\mathcal{N} = \mathcal{S}\setminus \mathcal{P}$ the set of negative bags. 
We assume that each positive bag contains $l$ positive samples, where $l$ 
may be small compared to the size of the bag $M$. 

The balance of the dataset will be denoted by $B$,
\begin{equation}
B = \Abs{\mathcal{N}} / \Abs{\mathcal{P}}.
\end{equation}
The balance is the ratio negative to positive examples. For instance, 
in the COCO dataset, the label ``car'' has $B \sim 30$, while the label 
``fish'' has $B \sim 300$. Note that here we refer to the image level labels 
extracted from the captions, not the hard labels of the dataset, although 
balance numbers there are in general similar to those of similar labels in 
captions.

The \textit{unpacked} dataset $\mathcal{S'}$ is the collection of all 
objects from all the bags in $\mathcal{S}$. Denote by 
$\mathcal{P'_{+}}$ the collection of all positive objects in 
$\mathcal{S'}$, 
\begin{equation}
\mathcal{P'_{+}} = \Set{ x \in \mathcal{S'} \setsep y_x = 1},
\end{equation}
and similarly set 
\begin{eqnarray}
\mathcal{P'_{-}} &=& \Set{ x \in \mathcal{S'} \setsep x \in b \mbox{ such that } y_b = 1 \mbox{ and } y_x = 0}, \nonumber \\
\mathcal{N'} &=& \Set{ x \in \mathcal{S'} \setsep x \in b \mbox{ such that } y_b = 0}
\end{eqnarray}
In words, $\mathcal{P'_{+}}$ is the collection of positive objects from 
positive bags, $\mathcal{P'_{-}}$ are negative objects from positive bags, 
and $\mathcal{N'}$ are all negative objects from all negative bags. 

Denote $P = \Abs{\mathcal{P}}$. Then we have 
\begin{equation}
\Abs{\mathcal{P'_{+}}} = l P, \spaceo \Abs{\mathcal{P'_{-}}} = (M-l)P \mbox{ and }
\Abs{\mathcal{N'}} = MBP. 
\end{equation}

The \textit{ground truth label assignment} assigns label $1$ to objects in 
$\mathcal{P'_{+}}$ and $0$ to objects in $\mathcal{P'_{-}}$ and 
$\mathcal{N'}$. The assignment that is available to us is the 
\textit{SI assignment}, which gives label $1$ to objects in 
$\mathcal{P'_{+}}$ and $\mathcal{P'_{-}}$ and $0$ to objects in 
$\mathcal{N'}$.

\subsection{Feature Dependence in Bags}
\label{sec:feature_dep_bags}
As has been previously noted in the literature, the statistical 
distribution of features inside positive and negative bags can have 
a significant impact on performance of MIL algorithms. 
Empirical observations on datasets with different kinds of distributions 
may be found in \cite{Ray}. See also \cite{SabatoMIL} for connections 
of the MIL problem to NP-hardness in cases where no restrictions on 
distributions are imposed.

Here we first discuss two extreme cases, that of complete dependence and 
of independence. Then we discuss realistic cases in between, and the 
relation of the dependence to the data balance and the SI objective. 
Specifically, in what follows we will be interested in the relation 
between the distribution of objects in $\mppmt$ and $\mnnt$, the negative 
features in positive and negative bags.

To describe an example of complete dependence, consider a hypothetical 
COCO type dataset, with labels ``car'' and ``tree'' given at an image 
level, where each image is a bag of patches.  We are interested in an 
instance level classifier for ``car''. However, suppose that the dataset 
is such that ``car'' and ``tree'' either appear both in an image, or 
both of them do not appear. In that situation, without additional 
assumptions, it is clear that \textit{any} MIL classifier, with any 
objective, will have to classify any instance of ``tree'' as ``car''. 
This is simply since ``tree'' and ``car'' are indistinguishable from the 
label information. 

On the other hand, one may consider a situation where features 
in $\mppmt$ and $\mnnt$ are generated by the same distribution. We 
formulate this as the mixing assumption, for future reference. 

\begin{assume}[Mixing]
\label{assm:mixing}
Objects in $\mathcal{P'_{-}}$ are generated from the same distribution as 
those in $\mathcal{N'}$. 
\end{assume}

To understand this assumption, consider the label ``car'' in a more 
realistic dataset. The patches with cars will belong to $\mathcal{P'_{+}}$. Patches 
with, ``trees'', however, will belong to $\mathcal{P'_{-}}$ or 
to $\mathcal{N'}$ depending on whether they were extracted from image 
containing a car or not. The mixing assumption essentially states that 
the probability of observing tree in an image is independent of whether 
there is a car in the image, and also that impossible to distinguish 
between the type of trees that appear in car images and in images without 
cars. The types of patches one expects in an image are independent of 
whether a car is present in the image or not.

When the mixing assumption holds, it is generally known that an SI 
assignment translates the MIL problem into a \textit{noisy label} problem. 
One thinks of label $1$ on objects from $\mppmt$ as noise. Then, 
classification with noise methods, such as \cite{Natarajan},  may be 
applied. See \cite{BlumMIL} for a variation of this approach (under 
some additional assumptions on bag composition).  As we discuss further in 
Section \ref{sec:niosy_label_estimators}, noisy label estimators depend 
heavily on the mixing assumption. On the other hand, in this paper 
we show that if the data is unbalanced, then the straightforward 
supervised learning classifier from the SI assignment is extremely robust 
against violations in the mixing assumption, which is indeed violated in 
real datasets.

Indeed, consider finally the real COCO dataset. While the concepts of 
``car'' and ``tree'' may be independent, it is clear and easy to verify 
 that the appearance of ``car'' in the image is strongly (but 
not completely) positively correlated with the concept ``traffic light'' 
and strongly negatively correlated with concept ``bear''.

\subsection{Results}
\label{sec:results_results}
We assume that we work with classifiers that take values in the interval 
$[0,1]$, for instance classifiers of the form $f(x) = \sigma(g(x))$, 
where $g(x)$ is a logit of a neural network. 

\begin{thm}
\label{thm:main_thm}
Assume there is a classifier $f(x)$ which fits the ground truth assignment 
perfectly, $f(x) = 1$ for $x\in \mathcal{P'_{+}}$, and $f(x) = 0$ for 
$x\in \mathcal{P'_{-}}$ and $x\in \mathcal{N'}$. 
If the mixing assumption \ref{assm:mixing} holds, then the SI loss 
objective 
\begin{flalign}
\label{eq:si_objective}
&L(g) = \\
&-\sum_{x \in \mathcal{P'_{+}}} \log g(x) - 
\sum_{x \in \mathcal{P'_{-}}} \log g(x) -
\sum_{x \in \mathcal{N'}} \log \Brack{1-g(x)}  \nonumber,
\end{flalign}
is minimized by $f'(x)$ such that 
\begin{equation}
\label{eq:si_mixing_solution}
f'(x) = 
\begin{cases*}
1 & if $x \in \mathcal{P'_{+}}$ \\
\frac{\Abs{\mathcal{P'_{-}}}}{\Abs{\mathcal{P'_{-}}} + \Abs{\mathcal{N'}}} & if 
$x \in \mathcal{P'_{-}} \cup \mathcal{N'} $  
\end{cases*}
\end{equation}
\end{thm}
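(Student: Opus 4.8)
The plan is to exploit the fact that the SI objective \eqref{eq:si_objective} decouples over feature space, so that minimizing over all $[0,1]$-valued classifiers (the ``sufficiently rich'' regime) reduces to an independent scalar problem at each feature value. I would write $\mu_{+},\mu_{-},\mu_{0}$ for the empirical feature distributions of $\mathcal{P'_{+}}$, $\mathcal{P'_{-}}$ and $\mathcal{N'}$, taken as unnormalized measures with total masses $\Abs{\mathcal{P'_{+}}}$, $\Abs{\mathcal{P'_{-}}}$ and $\Abs{\mathcal{N'}}$. Then
\[
L(g) = \int \Bigl[ -\log g(x)\, d(\mu_{+}+\mu_{-})(x) - \log\bigl(1-g(x)\bigr)\, d\mu_{0}(x) \Bigr],
\]
and since the integrand at $x$ depends on $g$ only through the scalar $g(x)$, one may minimize pointwise: at each $x$ minimize $\phi_{x}(v) = -\bigl(\mu_{+}(x)+\mu_{-}(x)\bigr)\log v - \mu_{0}(x)\log(1-v)$ over $v\in[0,1]$. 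Each $\phi_{x}$ is strictly convex and blows up at the relevant endpoints, so its minimizer is the unique stationary point, which a one-line computation gives as $v^{\ast}(x) = \frac{\mu_{+}(x)+\mu_{-}(x)}{\mu_{+}(x)+\mu_{-}(x)+\mu_{0}(x)}$.

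Next I would evaluate $v^{\ast}$ using the two hypotheses. Since some classifier $f$ fits the ground truth perfectly, no feature vector can belong both to an object of $\mathcal{P'_{+}}$ and to an object of $\mathcal{P'_{-}}\cup\mathcal{N'}$ --- otherwise $f$ would have to equal both $1$ and $0$ there --- so the support of $\mu_{+}$ is disjoint from that of $\mu_{-}+\mu_{0}$. On the support of $\mu_{+}$ this gives $\mu_{-}(x)=\mu_{0}(x)=0$, hence $v^{\ast}(x)=1$, the first case of \eqref{eq:si_mixing_solution}. On the support of $\mu_{-}+\mu_{0}$ we have $\mu_{+}(x)=0$ and $v^{\ast}(x)=\frac{\mu_{-}(x)}{\mu_{-}(x)+\mu_{0}(x)}$; at this point I would invoke the mixing Assumption~\ref{assm:mixing}, which says $\mu_{-}/\Abs{\mathcal{P'_{-}}} = \mu_{0}/\Abs{\mathcal{N'}}$, and substituting $\mu_{-}(x) = (\Abs{\mathcal{P'_{-}}}/\Abs{\mathcal{N'}})\,\mu_{0}(x)$ cancels the $\mu_{0}(x)$ factor and gives $v^{\ast}(x) = \frac{\Abs{\mathcal{P'_{-}}}}{\Abs{\mathcal{P'_{-}}}+\Abs{\mathcal{N'}}}$, the second case. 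Since $L$ is insensitive to the values of $g$ outside the support of $\mu_{+}+\mu_{-}+\mu_{0}$, this shows $f'$ is a minimizer.

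The bulk of this is routine; the step I expect to require the most care is pinning down the logical role of the perfect-fit hypothesis, which enters twice. It is what forces the supports of $\mu_{+}$ and $\mu_{-}+\mu_{0}$ to be disjoint --- and hence the pointwise optimum to equal $1$ on $\mathcal{P'_{+}}$ rather than some fractional value --- and, together with richness of the classifier class, it is what makes the piecewise-defined $f'$ an admissible competitor in the first place. I would also want to state two minor caveats explicitly: the minimizer is unique only up to $(\mu_{+}+\mu_{-}+\mu_{0})$-null sets, so the correct assertion is that $f'$ is \emph{a} minimizer; and the degenerate cases cause no trouble, since under mixing $\mu_{-}(x)=0$ exactly when $\mu_{0}(x)=0$, so the fraction in the second case is never of the form $0/0$ and the argument goes through even if $\mathcal{P'_{-}}$ happens to be empty.
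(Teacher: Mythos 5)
Your argument is correct in substance, but it is not the route the paper takes for this particular theorem: it is essentially the paper's proof of the more general Theorem \ref{thm:main_thm_gen}, specialized to the mixing case. There, the last two terms of the loss are rewritten as an expectation over a mixture measure dominating both negative-class distributions and minimized pointwise in $x$, giving $f'(x)=\Abs{\mathcal{P'_{-}}}\mu_{-}(x)\big/\Brack{\Abs{\mathcal{P'_{-}}}\mu_{-}(x)+\Abs{\mathcal{N'}}\mu_{0}(x)}$, which collapses to the constant $\Abs{\mathcal{P'_{-}}}\big/\Brack{\Abs{\mathcal{P'_{-}}}+\Abs{\mathcal{N'}}}$ under mixing --- exactly your computation. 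The paper's own proof of Theorem \ref{thm:main_thm} argues differently: it considers the pushforward distribution $G$ of the score $g(x)$, uses mixing to assert that $G$ is the same whether $x$ is drawn from $\mathcal{P'_{-}}$ or $\mathcal{N'}$, reduces the loss to $\mathbb{E}_{g\sim G}\left[\Abs{\mathcal{P'_{-}}}\log g+\Abs{\mathcal{N'}}\log\Brack{1-g}\right]$, and shows the optimum is attained by a point mass; this works directly on the score distribution and needs no densities on feature space, whereas your decomposition is more general and in effect proves Theorem \ref{thm:main_thm_gen} along the way. Both routes hinge on the same scalar fact about $a\log v+b\log(1-v)$.

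One point of framing you should fix: you cannot take $\mu_{+},\mu_{-},\mu_{0}$ to be the literal \emph{empirical} (counting) measures. The mixing Assumption \ref{assm:mixing} concerns the generating distributions, not the finite samples; with atomic empirical measures the points of $\mathcal{P'_{-}}$ and $\mathcal{N'}$ are generically distinct, your normalized measures are not equal, and the pointwise minimization would return the degenerate memorizing solution ($1$ on all SI-positive points, $0$ on all SI-negative points) rather than \eqref{eq:si_mixing_solution}. The theorem is a large-sample statement: the sums must be replaced by expectations under the generating distributions before your substitution $\mu_{-}(x)=\Brack{\Abs{\mathcal{P'_{-}}}/\Abs{\mathcal{N'}}}\mu_{0}(x)$ is legitimate. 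The paper makes this idealization explicit in both of its proofs, and your write-up should state it as well.
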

The loss (\ref{eq:si_objective}) corresponds to supervised 
learning with the SI label assignment. With our definitions, we have 
\begin{equation}
\label{eq:thereshold_value}
\frac{\Abs{\mathcal{P'_{-}}}}{\Abs{\mathcal{P'_{-}}} + \Abs{\mathcal{N'}}} = 
\frac{(M-l)P}{(M-l)P + MBP } \sim \frac{1}{B+1}.
\end{equation}
Therefore, by learning the SI objective, and thresholding the result 
at a value higher than $\frac{1}{B+1}$, we obtain a perfect classification 
with respect to the \textit{ground truth}. In particular $f'$ has the same 
precision-recall curve as $f$. Thus, if the rest of the assumptions hold, 
and the family of classifiers is rich enough to contain classifiers of the 
form $f'$ we can obtain instance level classification from bag level 
labels and an SI assignment. Note that $f'$ is simply a linear 
modification of $f$, 
\begin{equation}
\label{eq:f_prime_linear_form}
f'(x) = f(x) + (1-f(x))\cdot \frac{\Abs{\mathcal{P'_{-}}}}{\mathcal{\Abs{P'_{-}}} + \Abs{\mathcal{N'}}} 
\end{equation}
and we assume $f$ is in the family. On the other hand, note also that 
if $f$ is, say, a logistic regression, then $f'$ is no longer exactly 
realizable by a logistic regression. 
See also Section \ref{sec:niosy_label_estimators} for an additional 
discussion and an example where the richness of the class plays a role.

We now prove Theorem \ref{thm:main_thm}. 
\begin{proof}
First, to minimize (\ref{eq:si_objective}), it is clear that one 
has to set $g(x) = 1$ for $x \in \mathcal{P'_{+}}$. 
Our objective is therefore to show that the two other terms of 
(\ref{eq:si_objective}) are minimized by a constant value 
$g(x) = \frac{\Abs{\mathcal{P'_{-}}}}{\Abs{\mathcal{P'_{-}}} + \Abs{\mathcal{N'}}}$. 

Let $x$ be sampled from $\mathcal{P'_{-}}$. Then $g(x)$ is a scalar random 
variable, with some distribution $G$. By the mixing assumption, $g(x)$  
will have the same distribution $G$ when $x$ is sampled from  
$\mathcal{N'}$. We can therefore rewrite the last two terms of 
(\ref{eq:si_objective}) as 
\begin{flalign}
&\Abs{\mathcal{P'_{-}}}\cdot  \frac{1}{\Abs{\mathcal{P'_{-}}}} \cdot \sum_{x \in \mathcal{P'_{-}}} \log g(x) +
\Abs{\mathcal{N'}}\cdot  \frac{1}{\Abs{\mathcal{N'}}}
\sum_{x \in \mathcal{N'}} \log \Brack{1-g(x)}  \nonumber \\
&= \Abs{\mathcal{P'_{-}}} \Expsubidx{g \sim G}{ \log g} +
\Abs{\mathcal{N'}}\cdot  \Expsubidx{g \sim G}{ \log \Brack{1-g} } \nonumber \\
&= \Expsubidx{g \sim G}{\Brack{ \Abs{\mathcal{P'_{-}}}  \log g +
\Abs{\mathcal{N'}}\cdot  \log \Brack{1-g} }},
\label{eq:exp_G_si_loss}
\end{flalign}
where we have also removed the minus sign, and we seek to maximize 
(\ref{eq:exp_G_si_loss}) over all possible distributions $G$. We have used 
the identity of the distribution of $g(x)$ on $\mathcal{P'_{-}}$ and 
$\mathcal{N'}$ in the passage from the first to the second line. In this passage we have also assumed that 
sample averages may be replaced by respective expectations, that is, that we work in the large sample limit. A more detailed discussion of this assumption may be found in Section \ref{sec:conclusion}.

Next, one readily verifies that the function
\begin{equation}
r(g) =  \Abs{\mathcal{P'_{-}}}  \log g +
\Abs{\mathcal{N'}}\cdot  \log \Brack{1-g} 
\end{equation}
with $g \in (0,1)$ is maximized at $g = \frac{\Abs{\mathcal{P'_{-}}}}{\Abs{\mathcal{P'_{-}}} + \Abs{\mathcal{N'}}}$. This can be seen either directly by taking the derivative, or as a consequence of the non-negativity of the Kullback-Leibler divergence between the distributions 
on two points given by $(\frac{\Abs{\mathcal{P'_{-}}}}{\Abs{\mathcal{P'_{-}}} + \Abs{\mathcal{N'}}}, \frac{\Abs{\mathcal{N'}}}{\Abs{\mathcal{P'_{-}}} + \Abs{\mathcal{N'}}})$ and $(g,1-g)$. 
It therefore follows that (\ref{eq:exp_G_si_loss}) is maximized when 
$G$ is an atomic distribution taking value 
$\frac{\Abs{\mathcal{P'_{-}}}}{\Abs{\mathcal{P'_{-}}} + \Abs{\mathcal{N'}}}$ with probability 1. 
\end{proof}

We now analyze the case where the mixing assumption does not hold. 
\newcommand{\mupp}{\mu_{\mathcal{P'_{-}}}}
\newcommand{\munn}{\mu_{\mathcal{N'}}}
\begin{thm}
\label{thm:main_thm_gen}
Denote by $\mupp(x)$ and 
$\munn(x)$ the distributions form which 
objects in $\mathcal{P'_{-}}$ and $\mathcal{N'}$ are generated, 
respectively. Assume there is a classifier $f(x)$ which fits the ground 
truth assignment perfectly, $f(x) = 1$ for $x\in \mathcal{P'_{+}}$, 
and $f(x) = 0$ for $x\in \mathcal{P'_{-}}$ and $x\in \mathcal{N'}$.

Then the SI loss objective 
\begin{flalign}
\label{eq:si_objective_g}
&L(g) = \\
&-\sum_{x \in \mathcal{P'_{+}}} \log g(x) - 
\sum_{x \in \mathcal{P'_{-}}} \log g(x) -
\sum_{x \in \mathcal{N'}} \log \Brack{1-g(x)}  \nonumber,
\end{flalign}
is minimized by $f'(x)$ such that 
\begin{equation}
\label{eq:si_non_mixing_solution}
f'(x) = 
\begin{cases*}
1 & if $x \in \mathcal{P'_{+}}$ \\
\frac{\Abs{\mathcal{P'_{-}}}\mupp(x)}{\Abs{\mathcal{P'_{-}}}\mupp(x) + \Abs{\mathcal{N'}}\munn(x)} & if 
$x \in \mathcal{P'_{-}} \cup \mathcal{N'} $  
\end{cases*}
\end{equation}
\end{thm}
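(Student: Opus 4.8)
The plan is to mirror the proof of Theorem~\ref{thm:main_thm}, the only change being that the scores of objects in $\mppmt$ and in $\mnnt$ are now governed by two genuinely different densities, $\mupp$ and $\munn$, so the reduction to a single score distribution $G$ is no longer available and must be replaced by a pointwise optimization. As before, the first term of (\ref{eq:si_objective_g}) is minimized, at value $0$, exactly when $g(x)=1$ on $\mathcal{P'_{+}}$, which is consistent with the asserted form of $f'$; so it remains to treat the last two terms.

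Passing to the large-sample limit exactly as in the previous proof, I would rewrite the sum of the last two terms of (\ref{eq:si_objective_g}) as
\begin{equation*}
-\Abs{\mppmt}\,\Expsubidx{x\sim\mupp}{\log g(x)} \;-\; \Abs{\mnnt}\,\Expsubidx{x\sim\munn}{\log\Brack{1-g(x)}},
\end{equation*}
and then express this as the single integral $-\int \Brack{\Abs{\mppmt}\mupp(x)\log g(x) + \Abs{\mnnt}\munn(x)\log\Brack{1-g(x)}}\,dx$. Minimizing over all $[0,1]$-valued classifiers then reduces to maximizing the integrand pointwise: for each fixed $x$ one must maximize over $t\in(0,1)$ the function
\begin{equation*}
r_x(t) \;=\; \Abs{\mppmt}\mupp(x)\,\log t \;+\; \Abs{\mnnt}\munn(x)\,\log\Brack{1-t},
\end{equation*}
which is precisely the function $r(g)$ of the proof of Theorem~\ref{thm:main_thm} with the constants $\Abs{\mppmt},\Abs{\mnnt}$ replaced by $\Abs{\mppmt}\mupp(x),\Abs{\mnnt}\munn(x)$. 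By the identical argument (set the derivative to zero, or invoke non-negativity of the Kullback-Leibler divergence between the two-point distributions $\Brack{t^{*}(x),1-t^{*}(x)}$ and $\Brack{t,1-t}$) the maximizer is
\begin{equation*}
t^{*}(x) \;=\; \frac{\Abs{\mppmt}\mupp(x)}{\Abs{\mppmt}\mupp(x)+\Abs{\mnnt}\munn(x)},
\end{equation*}
which is exactly the second branch of (\ref{eq:si_non_mixing_solution}). As in the discussion following Theorem~\ref{thm:main_thm}, taking the minimization over a family rich enough to contain the classifier $t^{*}$ (equivalently, over all measurable $[0,1]$-valued functions), this $f'$ minimizes (\ref{eq:si_objective_g}).

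The step I expect to require the most care --- beyond the replacement of sample averages by expectations, whose status is discussed in Section~\ref{sec:conclusion} --- is the rigorous reduction of the integral minimization to the pointwise one, together with the behavior on the null sets of $\mupp$ and $\munn$. On the set where both densities vanish the value of $g$ is immaterial; where exactly one of them vanishes, $r_x$ is monotone and its supremum is approached at $t=1$ or $t=0$, which is exactly the value the formula for $t^{*}(x)$ returns, so no special case is needed. The pointwise-to-integral passage is legitimate because $r_x(t)\le r_x\Brack{t^{*}(x)}$ for every $t$, the bounding function $x\mapsto r_x\Brack{t^{*}(x)}$ is integrable (it is the value attained by $f'$), and a measurable optimal selection $x\mapsto t^{*}(x)$ exists since $t^{*}$ is an explicit continuous function of the two densities. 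Equivalently, and perhaps more cleanly, one can phrase the whole argument as an instance of log-loss being a strictly proper scoring rule: on the pooled negative-object set $\mppmt\cup\mnnt$, equipped with the mixture density proportional to $\Abs{\mppmt}\mupp+\Abs{\mnnt}\munn$, the SI label equals $1$ with conditional probability exactly $t^{*}(x)$, and the cross-entropy risk is uniquely minimized by this conditional-probability function.
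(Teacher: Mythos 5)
Your proposal is correct and follows essentially the same route as the paper's proof: reduce the last two terms to expectations weighted by $\Abs{\mppmt}\mupp$ and $\Abs{\mnnt}\munn$, combine them into a single integral (the paper does this via an explicit change of measure to the mixture $\hat\mu$, which is equivalent to your direct integral form), and maximize pointwise the function $a\log t + b\log(1-t)$ exactly as in Theorem~\ref{thm:main_thm}. Your added remarks on null sets, measurable selection, and the proper-scoring-rule reformulation are sound refinements of the same argument rather than a different approach.
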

\begin{proof}
As in the proof of Theorem \ref{thm:main_thm}, the existence of a 
separating classifier $f$ implies that $\mathcal{P'_{+}}$ and 
$\mathcal{P'_{-}} \cup \mathcal{N'}$ are disjoint, and similarly we set 
$f'(x) = 1$ for $x \in \mathcal{P'_{+}}$. We now consider the last two 
terms of the cost (\ref{eq:si_objective_g}), and 
$x \in \mathcal{P'_{-}} \cup \mathcal{N'}$. 
Rewrite the terms in (\ref{eq:si_objective_g}) as 
\begin{equation}
\label{eq:E1_def}
E_1(g) := \sum_{x \in \mathcal{P'_{-}}} \log g(x)  = 
\Abs{\mathcal{P'_{-}}}\cdot\Expsubidx{ x \sim \mupp }{ \log g(x)}
\end{equation}
and 
\begin{flalign}
&E_2(g) := \nonumber \\
&\sum_{x \in \mathcal{N'}} \log \Brack{1-g(x)} = 
\Abs{\mathcal{N'}}\cdot \Expsubidx{ x \sim \munn}{\Brack{1- \log g(x)}}.
\label{eq:E2_def}
\end{flalign}
Define the mixture $\hat{\mu}(x)$ by
\begin{equation}
\hat{\mu}(x) = 
\Brack{\Abs{\mathcal{P'_{-}}} \mupp(x) + \Abs{\mathcal{N'}} \munn(x) } \big/ 
\Brack{\Abs{\mathcal{P'_{-}}} + \Abs{\mathcal{N'}}}. 
\end{equation}
Since both $\mupp$ and $\munn$ are absolutely continuous with respect to 
$\hat{\mu}$, we can change the measure to write
\begin{equation}
E_1(g) = \Expsubidx{x \sim \hat{\mu}}{
 \frac{\Abs{\mathcal{P'_{-}}}\mupp(x)}{\hat{\mu}(x)} \log g(x) 
}
\end{equation}
and 
\begin{equation}
E_2(g) = \Expsubidx{x \sim \hat{\mu}}{
 \frac{\Abs{\mathcal{N'}}\mupp(x)}{\hat{\mu}(x)} \Brack{1-\log g(x) }
}. 
\end{equation}
Thus, 
\begin{flalign}
&E_1(g) + E_2(g) = \\
&
\Expsubidx{x \sim \hat{\mu}}{\Brack{
 \frac{\Abs{\mathcal{P'_{-}}}\mupp(x)}{\hat{\mu}(x)} \log g(x) 
+
 \frac{\Abs{\mathcal{N'}}\mupp(x)}{\hat{\mu}(x)} \Brack{1-\log g(x) }
 } \nonumber
}.
\end{flalign}
It remains to observe that similarly to the argument in Theorem 
\ref{thm:main_thm}, for each fixed $x$, the 
expression
\begin{equation}
 \frac{\Abs{\mathcal{P'_{-}}}\mupp(x)}{\hat{\mu}(x)} \log g(x) 
+
 \frac{\Abs{\mathcal{N'}}\mupp(x)}{\hat{\mu}(x)} \Brack{1-\log g(x) }
\end{equation}
is maximized over $g$ at $x$ iff 
\begin{equation}
g(x) = \frac{\Abs{\mathcal{P'_{-}}}\mupp(x)}{\hat{\mu}(x)},
\end{equation}
which concludes the proof. 
\end{proof}
Note that Theorem \ref{thm:main_thm_gen} is a proper generalization of Theorem \ref{thm:main_thm}. However, we chose to presented them separately due to illustrative purposes.

As discussed earlier, Theorem \ref{thm:main_thm_gen} reveals the real 
power of the SI method in the unbalanced case. 
Consider the expression for $f'(x)$ in (\ref{eq:si_non_mixing_solution}), 
for $x \in \mppmt \cup \mnnt$:
\begin{flalign}
&f'(x) = \\
&\frac{\Abs{\mppmt}\mupp(x)}{\Abs{\mppmt}\mupp(x) + \Abs{\mnnt}\munn(x)} =  \\
&\frac{1}{1 + \frac{\Abs{\mnnt}\munn(x)}{ \Abs{\mppmt}\mupp(x)} }. 
\end{flalign}
In terms of Theorem \ref{thm:main_thm_gen}, the mixing assumption 
\ref{assm:mixing} is equivalent to the assertion $\mupp(x) = \munn(x)$ for 
all $x$. In this case, 
the term 
\begin{equation}
\frac{1}{1 + \frac{\Abs{\mnnt}\munn(x)}{ \Abs{\mppmt}\mupp(x)} }
\end{equation}
reduces to 
\begin{equation}
\frac{1}{1 + \frac{\Abs{\mnnt}}{ \Abs{\mppmt}} } \sim 
\frac{1}{1 + B}. 
\end{equation}
As discussed previously, this allows us to place a decision threshold 
above $\frac{1}{1 + B}$ and make a perfect classification. Next, 
if $\mupp(x) < \munn(x)$, then 
\begin{equation}
\frac{1}{1 + \frac{\Abs{\mnnt}\munn(x)}{ \Abs{\mppmt}\mupp(x)} } 
< \frac{1}{1 + \frac{\Abs{\mnnt}}{ \Abs{\mppmt}} }
\end{equation}
and therefore the same decision threshold will still result in 
correct classification. These therefore are the easier cases. Consider now 
what happens when $\mupp(x) > \munn(x)$. The extreme case discussed above 
of ``tree'' appearing in the image if and only if ``car'' appears 
 would correspond 
to $\mupp(x) > 0$ and $\munn(x) = 0$ for features $x$ corresponding to 
``tree''. Therefore one asks how much larger $\mupp(x)$ can be compared to $\munn(x)$. Suppose that we wish to place the decision threshold at 
$\frac{1}{2}$. Then $f'(x)\leq \half$ iff 
\begin{equation}
B \munn(x) \sim  \frac{\Abs{\mnnt}}{\Abs{\mppmt}} \munn(x) \geq \mupp(x).
\end{equation}
Therefore, the frequency of $x$ in $\mppmt$ can be up to $B$ times larger 
than that in $\mnnt$ and still obtain the right classification. In other 
words, the lack of balance in the data provides a large margin in which 
the mixing assumption may not hold. The larger the imbalance $B$ is, the 
larger dependence in features the SI method can tolerate.
Therefore \textbf{in typically unbalanced MIL dataset, SI is a robust 
classification method}.  

To conclude this section let us make a few notes regarding  the 
separability assumption -- the assumption in Theorems \ref{thm:main_thm} 
and \ref{thm:main_thm_gen} that there is a classifier $f$ which separates 
$\mathcal{P'_{+}}$ and $\mppmt \cup \mnnt$ perfectly. One could consider a more general case 
where the optimal, in terms of cross-entropy cost, classifier of the 
unpacked dataset has a precision-recall curve that is not identically one
(and hence has an average precision score smaller than $1$). 
This could happen for instance if the features are not strong enough to 
completely separate the classes. If the mixing assumption holds, arguments 
similar to those of Theorem \ref{thm:main_thm} imply that the  $f'$
learned from the SI assignment would still have the form 
(\ref{eq:f_prime_linear_form}), and, since this form is monotone in $f$, 
would have the same precision-recall curve as $f$.  When the mixing 
assumption does not hold, instead of considering the ratio of densities 
between the positive and negative classes, one would have to consider the 
ratios at all level sets of $f$. While this would significantly complicate 
the notation, conclusions similar to those of Theorem 
\ref{thm:main_thm_gen} would still hold.

\section{Experiments}
\label{sec:experiments}

\subsection{Non Linearity And Noisy Label Estimator}
\label{sec:niosy_label_estimators}
In this Section we evaluate the SI method on a data where the mixing assumption does not hold. We demonstrate the utility of adding a non-linearity. In addition, as described in Section \ref{sec:feature_dep_bags}, we evaluate the noisy label cost from \cite{Natarajan}, referred to as UC, and show that it does not perform well when the mixing assumption does not hold. 

%$\mppmt$ and $\mnnt$
\newcommand{\spp}{\mathcal{P'}_{+}}
We work with the unpacked dataset (as defined in Section \ref{sec:si_definitions}) corresponding to values 
$M=100$, $\ell=1$, $B=20$ and $P=100$. The sets for $\spp$ and $\mnnt$ are shown in Figure \ref{fig:clean_data}. 
The set $\spp$ is located on the line $(x,-0.5)$ with $x$ uniformly distributed in $[-2,2]$,  $x \sim \mathcal{U}  [-2,2]$. The set $\mnnt$ is split evenly between two intervals. The first half is located on a line $(-1,y)$ with $y \sim \mathcal{U}  [0,5]$ and the second half is located on a line $(1,y)$ with $y \sim \mathcal{U}  [0,5]$. In order to break the mixing assumption, $80\%$ of the points from $\mppmt$ are distributed on the line $(1,y)$ and $20\%$ on the line $(-1,y)$ with $y \sim \mathcal{U}  [0,5]$ in both cases.  Note that the mixing assumption would correspond to a $50\% - 50\%$ split. Clearly, this dataset is linearly separable, e.g. by the line $y=-0.25$. The noisy data is illustrated in Figure \ref{fig:noisy_data}. Note that for clarity only a small fraction of the points appear on the plots.

For both SI and UC we trained two models on the data with the SI assignment labels (Figure \ref{fig:noisy_data}). The first model is a one layer neural network, i.e a linear model. The second model is a two layers neural network, with two neurons in the hidden layer and sigmoid activations.
We trained each model for $100000$ epochs \footnote{We also tried to run more epochs but it did not change the conclusions.} with the ADAM optimizer where we set batch size equals to dataset size. We trained with a constant learning rate in $\{10^{-4}, 10^{-5}, 10^{-6}\}$ and choose the classifier achieving the lowest training loss.

The average precision scores\footnote{Computed with \texttt{average\_precision\_score} function from sklearn.metrics.}
of the resulting classifiers with respect to the true labels (Figure \ref{fig:noisy_data}), are shown in Table \ref{tab:AP}.

In Figure \ref{fig:heatmaps} the prediction score (the output sigmoid of the model) is shown as a heat-map for each case. We first note that although the UC cost is theoretically guaranteed to find the correct classifier when mixing holds, here it fails in both architectures. For the SI cost, observe that the linear classifier approximates only 
poorly the optimal SI classifier $f'$, \ref{eq:si_non_mixing_solution}. However, the two layer model approximates 
$f'$ much better (Figure \ref{fig:heatmaps}, bottom left) and thresholding it at an appropriate level separates 
ground truth positives from negatives perfectly, therefore yielding the AP score $1$.

\begin{table}
\begin{center}
\begin{tabular}{|l|c|c|}
\hline
Method & One Layer & Two Layers  \\
\hline\hline
SI & 0.21 & 1 \\
Unbiased Estimator & 0.30 & 0.23 \\
\hline
\end{tabular}
\end{center}
\caption{Average precision scores of the optimizer of SI and UC costs, for linear and two layer models.}
\label{tab:AP}
\end{table}

\begin{figure*}
\centering
\subfigure[Clean data]{\label{fig:clean_data}\includegraphics[width=.49\linewidth, height = 5cm]{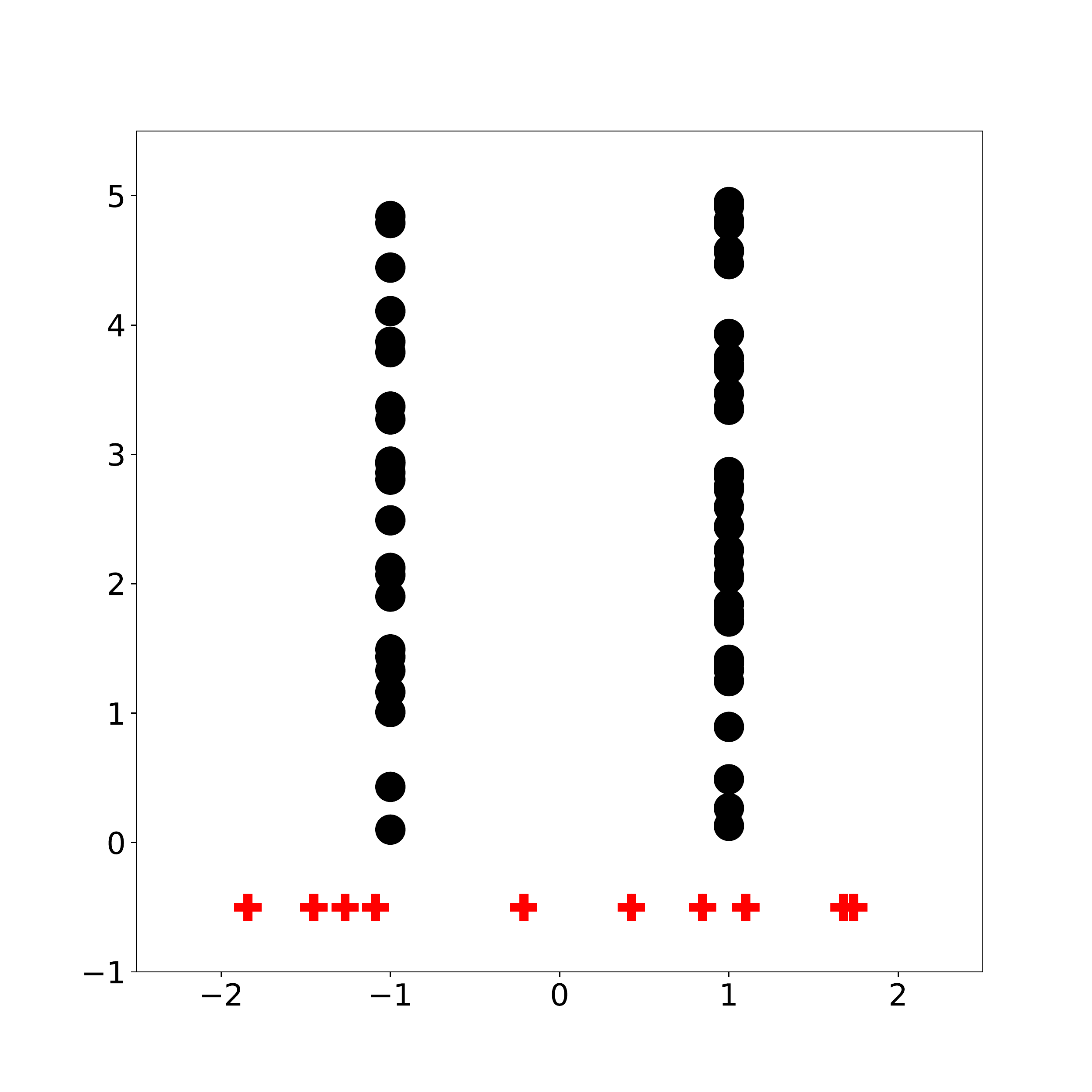}}
\subfigure[Noisy data]{\label{fig:noisy_data}\includegraphics[width=.49\linewidth, height = 5cm]{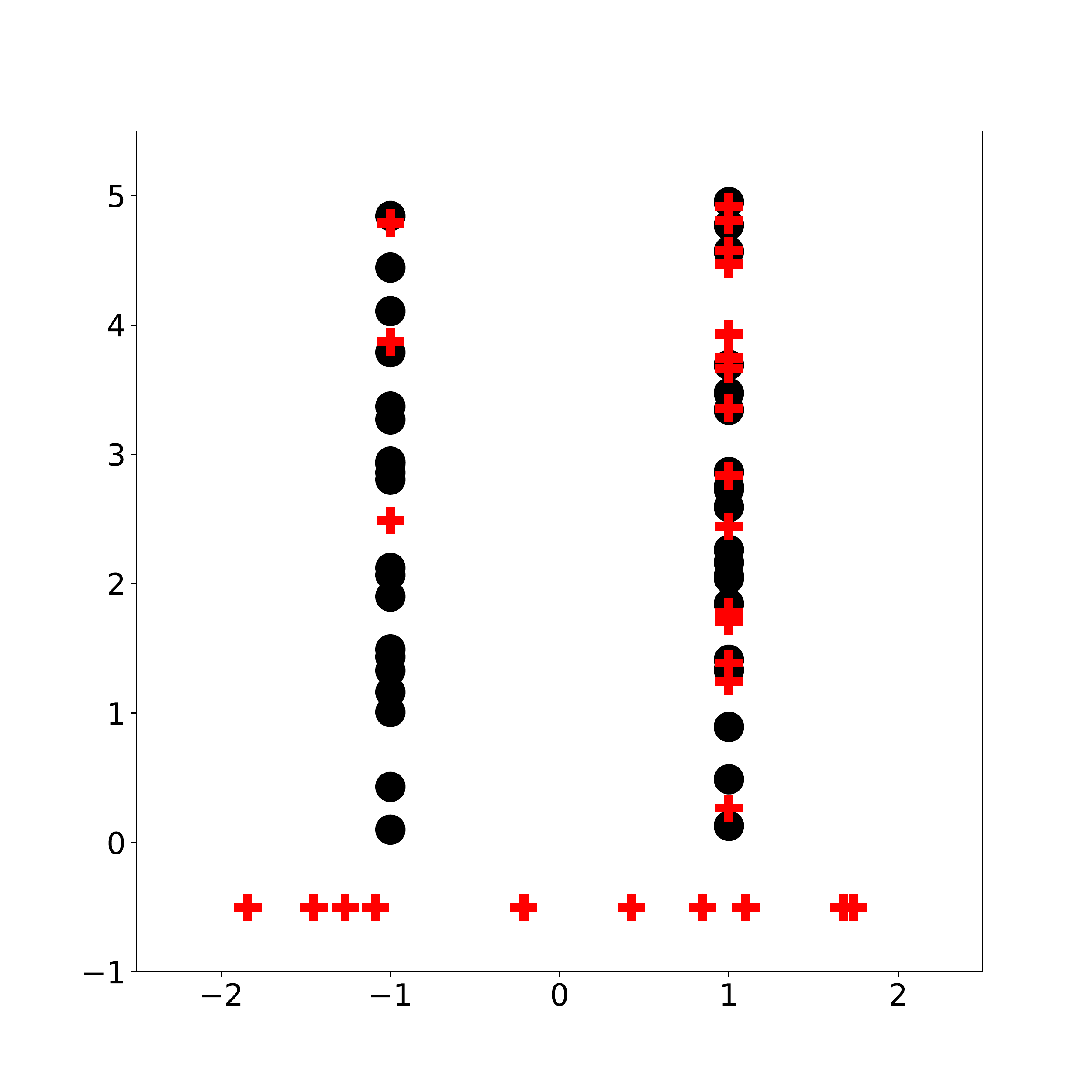}}
\caption{Illustration of the data used in noisy labels experiment. (a) - clean data, (b) - noisy data used for training. Only a small fraction of the data is illustrated (best viewed in color).}
\end{figure*}

\begin{figure*}[t]
\centering
\subfigure[One layer, SI]{\label{fig:1L_SI}\includegraphics[width=.4\linewidth,height = 4cm]{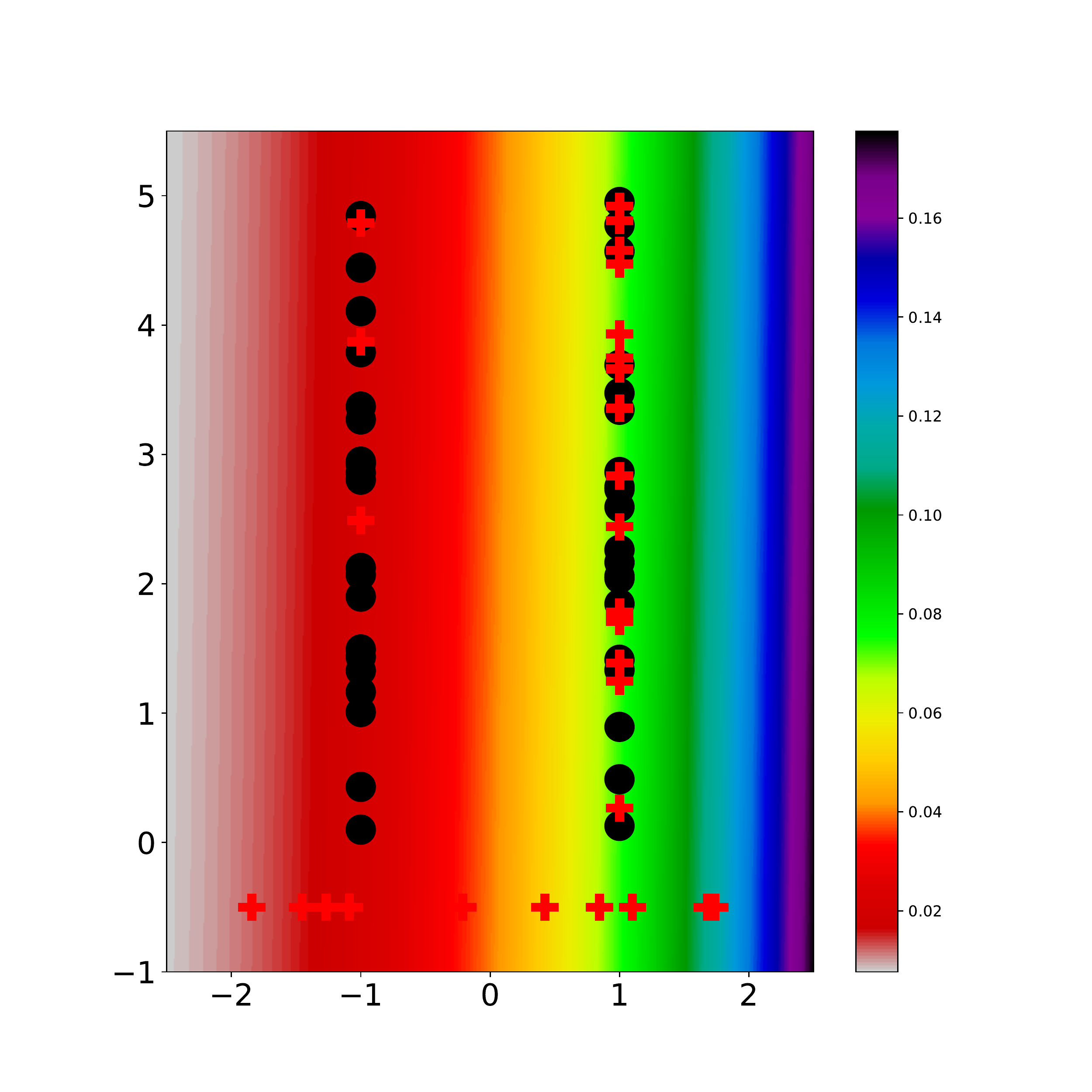}}
\subfigure[One layer, UC]{\label{fig:1L_unbiased}\includegraphics[width=.4\linewidth,height = 4cm]{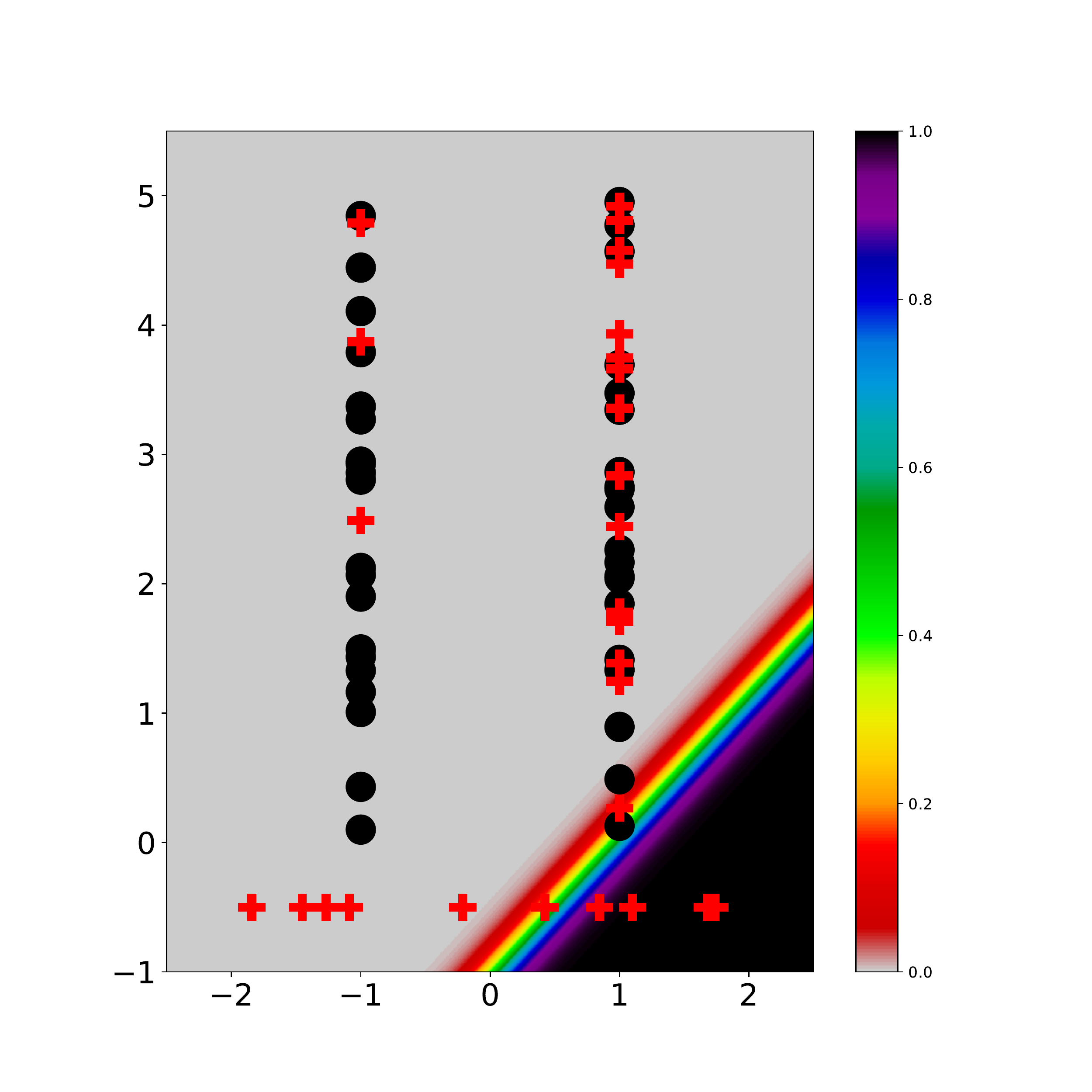}}
\subfigure[Two layers, SI]{\label{fig:2L_SI}\includegraphics[width=.49\linewidth,height = 4cm]{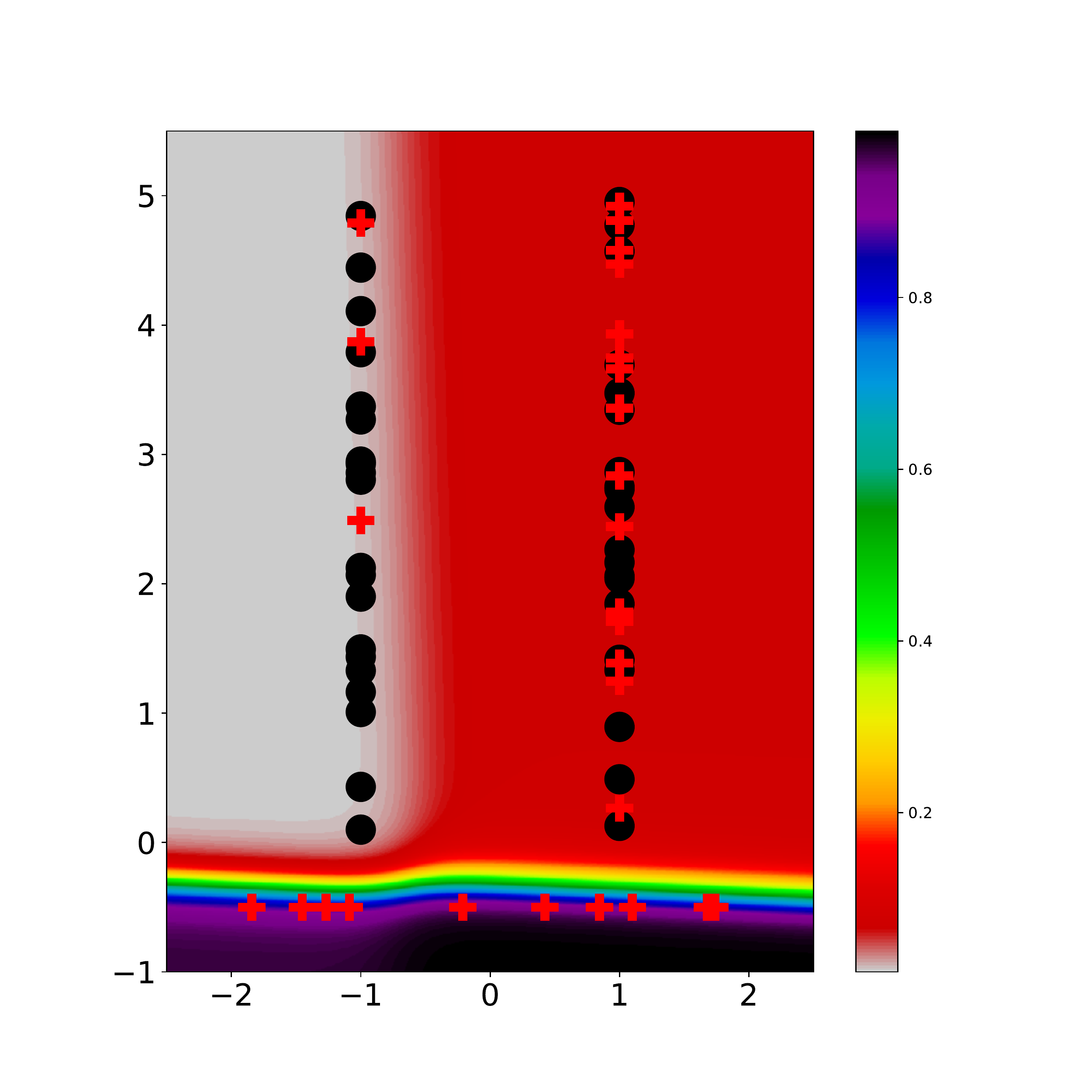}}
\subfigure[Two layers, UC]{\label{fig:2L_unbiased}\includegraphics[width=.49\linewidth,height = 4cm]{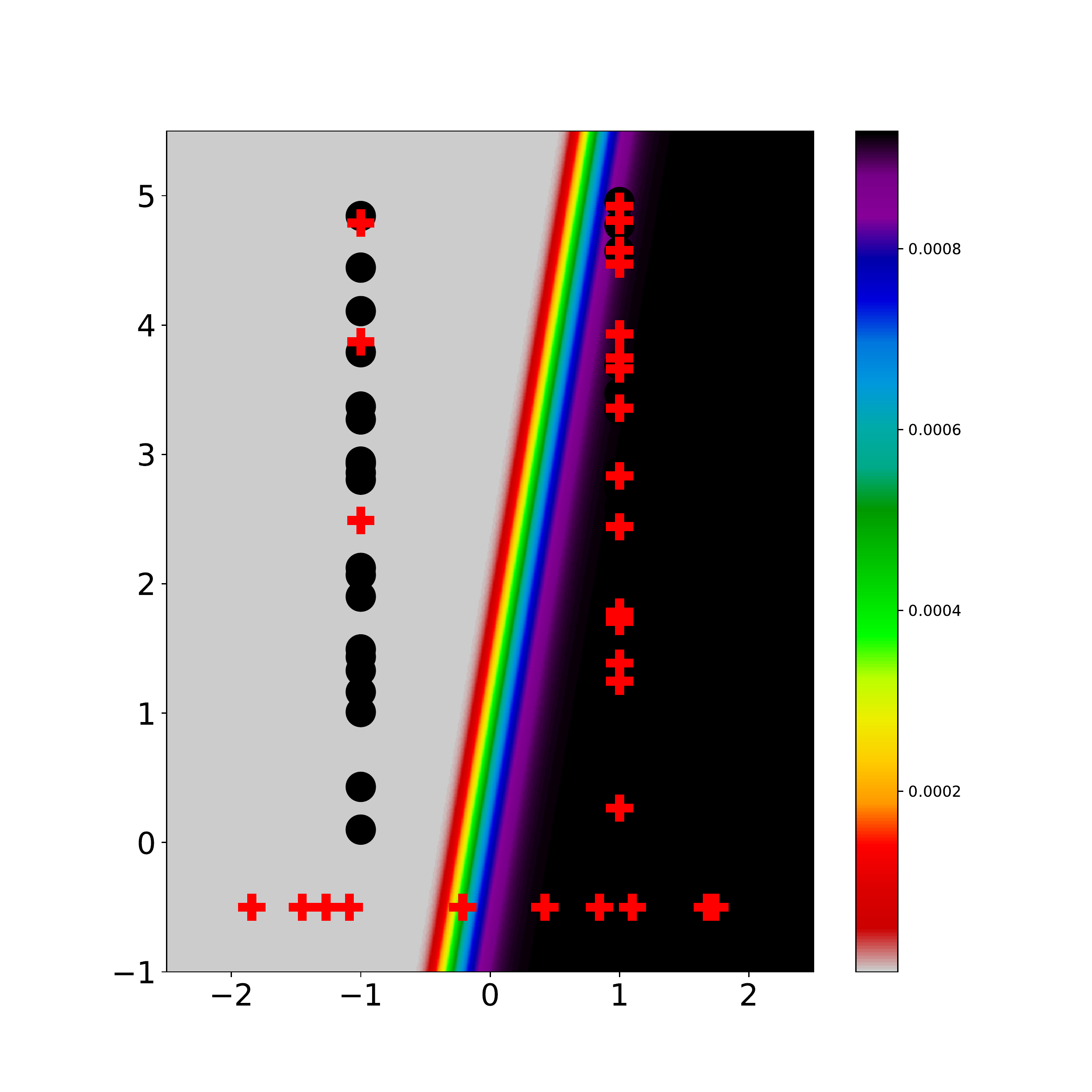}}
\caption{Heat-maps representing scores of the models trained with SI and UC costs (best viewed in color).
}
\label{fig:heatmaps}
\end{figure*}

\subsection{COCO}
As described in the Introduction, we consider the problem of object 
classification from captions data on the COCO 2014 dataset \cite{COCO_ds}. 
This problem can be naturally interpreted as an MIL problem. 

We adopt the experimental setting of \cite{Fang_2015_CVPR}, and 
compare the performance of the SI classifier to the performance 
of the MIL objective used in \cite{Fang_2015_CVPR}.

In this setting, each image is rescaled to a size of $576 \times 576$,
and divided into $12\times12$ patches of size $224$ with stride $32$. Each 
image is therefore a bag containing $144$ objects. Each image was fed into 
a VGG16 network, \cite{vgg16}. The output of the $fc7$ layer is then 
a $4096$-dimensional representation for each patch. Next, a convolutional 
layer with $(1,1)$ stride and $1000$ filters is used to represent 
classifiers from patch features, for a $1000$ labels. We refer to 
\cite{Fang_2015_CVPR} for full architectural details. 

The image labels are derived from captions. No preprocessing was done on 
captions, except a conversion to lower case. The vocabulary of labels 
consists of  $1000$ most frequent tokens appearing in captions. Note 
that about $50$ of these tokens are stopwords. However, to allow direct 
comparison to the code of \cite{Fang_2015_CVPR}, we chose to maintain the 
same vocabulary, and to measure the performance on all the labels, and 
separately on a selected subset of labels, as discussed below. 

For a fixed label $z$, let $f_z(x)$ be the sigmoid output of a classifier 
corresponding to $z$.  For patches $x_1, \ldots,x_{144}$ of an image $b$, 
the MIL objective used in \cite{Fang_2015_CVPR} corresponding to the 
image is 
\begin{equation}
\label{eq:fang_objective}
o_z(b) = 1 - \prod_{i=1}^{144} \Brack{1 - f_z(x_i)}
\end{equation}
and the total cost term corresponding to $b$ is obtained by summing the 
cost over all labels, 
\begin{equation}
\label{eq:total_cost_fang}
c(b) = \sum_z ce(o_z(b),y_z(b)),
\end{equation}
where $y_z(b)$ is the indicator of the label and $ce$ is the cross-entropy 
cost. 
The SI objective for the image $b$ is given by 
\begin{equation}
\label{eq:si_total_cost_coco}
c(b) = \sum_z \sum_{i=1}^{1000} ce(f_z(x_i),y_z(b)).
\end{equation}

We have evaluated the performance at the bag level. Specifically, 
for a label $z$, and image $b$, given the scores $f_z(x_i)$ we construct 
a bag level score $s_z(b)$ via 
\begin{equation}
s_z(b) = \max_i f_z(x_i). 
\end{equation}
Then we evaluate the Average Precision of the scores $s_z(b)$ against the 
labels $y_z(b)$ on the COCO \textit{eval} set. The mean Average Precision 
(mAP) is the mean over all labels $z$. In addition, as discussed above, 
since some labels are stopwords, and some labels appear very few times in the 
dataset, we also measure the mAP on a smaller subset of ``strong labels''. 
These are the labels such that their token appears as one of the object 
categories COCO, since these tend to be better represented 
in the dataset.  For instance ``car'' is a strong label, 
while ``water'' is not. The matching between caption labels and categories 
was done via text matching. Since some categories are described by two 
words (ex. ``traffic light''), they were not included. This process 
generated $63$ labels. It is important to note that object categories were 
only used to select the subset of labels. Training and evaluation of all models were performed solely using the images and caption data. 

To obtain the 
results for the MIL objective 
(\ref{eq:fang_objective}) we have used the code from 
\cite{Fang_2015_CVPR}, available online. These results were reproduced 
in our own code, implemented in Tensorflow. To obtain the results for the 
SI objective, we replaced the cost with the SI cost 
(\ref{eq:si_total_cost_coco}) in our implementation. The models were 
trained for $6$ epochs, at which point both of the models 
converged.

The results are given 
in Table \ref{tab:results_coco}. One can see that the results are close, 
although the MIL (\ref{eq:fang_objective}) results are slightly higher. 
We believe that the difference is due to the hyper-parameters rather 
than due to intrinsic properties of the costs involved. 
We have not attempted any hyper-parameter tuning due to the high 
computational cost of this operation. 
Instead, we have used the given heavily tuned hyper-parameters of the 
\cite{Fang_2015_CVPR} code (hardcoded bias term initializations, SGD 
learning rate type and decay, hardcoded varying training rates for 
different layers). These hyper-parameters were designed for the original 
objective, but are not necessarily optimal for SI. 

\begin{table}
\begin{center}
\begin{tabular}{|l|c|c|}
\hline
Method & All labels & Strong Subset  \\
\hline\hline
MIL(\ref{eq:fang_objective}) & 0.30 & 0.59 \\
SI & 0.26 & 0.56 \\
\hline
\end{tabular}
\end{center}
\caption{Comparison of mAP of MIL(\ref{eq:fang_objective}) 
and SI objectives on all labels, and on the strong labels subset.}
\label{tab:results_coco}
\end{table}

\iffalse
To put the numbers in Table \ref{tab:results_coco} into perspective, we 
have also evaluated the \textit{bag level} performance of a state-of-the-
art object detector ResNet50

Actual results: string detector on weak labels 0.57, 
on strong labels 0.73. Explain the various differences. ResNet. 
\fi

\iffalse
our 12: 0.26, 0.53 
our 20: 0.22, 0.47 
our combined: 0.27, 0.54

caffe: 0.29, 0.59 
\fi

\iffalse
describe how combining of the two scales was done. 
mention combining??
\fi

\section{Conclusions and Future Work}
\label{sec:conclusion}
We have shown that SI learning is an effective classification method for 
MIL data if the problem has the following characteristics: (a) The bag 
labels are derived from objects, in the sense that a bag is positive if 
and only if it contains a positive object. (b) The data is unbalanced -- 
there are more negative bags than positive. This allow the classification 
to be tolerant to a significant amount of dependence in the bags. (c) The 
class of classifiers is rich enough to contain not only the reference 
ground truth classifier, but also the classifiers $f'$ described by 
Theorems \ref{thm:main_thm} and \ref{thm:main_thm_gen}.

We now describe two possible directions for future work. From the 
theoretical perspective, our results are large-sample limit results. In 
particular we have assumed that we may replace sample averages by the 
respective expectations, as was done in (\ref{eq:exp_G_si_loss}) and 
(\ref{eq:E1_def}),(\ref{eq:E2_def}). While these computations allow us to 
understand the essential features of the problem, it is still an 
intriguing question of what can be said at the sample level. Classically, 
such questions may be answered within the framework of bounded complexity 
classifier classes, via notions such as the Rademacher complexity. 
However, these notions are well known not to be an adequate measure of 
complexity for neural networks, and therefore one must find a different 
approach.

From the practical perspective, the most appealing feature of SI method 
is the ability in principle to deal with arbitrarily large bags. As 
discussed earlier, typical MIL objectives compute a score, such as 
(\ref{eq:fang_objective}) which depends on \textit{all} objects in the 
bag. Therefore, one either has to be able to have all objects in memory at 
once, or to design a cumbersome architecture to compute such a score 
sequentiality. The SI approach on the other hand, does not have this 
problem. Note that large bags may appear naturally in applications. 
Consider for example the situation where a news article is treated as a 
bag, containing \textit{several} images. Even for a relatively modest 
number of images, keeping several copies of a modern visual CNN in memory 
is already prohibitive. We hope that the considerations in this paper 
shed light on the usefulness of the SI method and therefore open the door 
for such applications.

\bibliographystyle{ieee}
\bibliography{mil}

\end{document}